\newcommand{\gi}[1]{}
\newcommand{\di}[1]{}
\newcommand{\ce}[1]{}
\newcommand{\ma}[1]{}
\def\l{\mathcal{L}}
\def\x{\mathbf{x}}
\def\m{\mathbf{\mu}}
\def\SNRPhi{\textnormal{SNR}(\phi_0 \to \phi_1)}
\def\KLPhi{\textnormal{sKL}(\phi_0,\phi_1)}
\def\LGUNO{\log \left( \phi_1(\x) \right)}
\def\LGZERO{\log \left( \phi_0(\x) \right)}
\def\mX{\mathcal{X}}
\def\mR{\mathbb{R}}
\def\change{\phi_0 \to \phi_1}
\def\ezero{\underset{\x \sim \phi_0}{E}}
\def\eone{\underset{\x \sim \phi_1}{E}}
\def\varzero{\underset{\x \sim \phi_0}{\textnormal{var}}}
\def\varone{\underset{\x \sim \phi_1}{\textnormal{var}}}
\newcommand{\mb}[1]{\mathbf{#1}}
\definecolor{myred}{rgb}{0.8,0.2,0.2}%
\definecolor{myblue}{rgb}{0.2,0.2,0.8}%
\definecolor{mygreen}{rgb}{0.2,0.8,0.2}%
\begin{document}

	\title{Change Detection in Multivariate Datastreams: Likelihood and 
		Detectability Loss}

\author{Cesare Alippi \and Giacomo Boracchi \and Diego Carrera \and Manuel 
	Roveri}
%

\institute{Dipartimento di Elettronica, Informazione e Bioingegneria,\\ 
	Politecnico di Milano, Milano, Italy\\
	\mailsa\\}
	\date{}
	\maketitle

	\begin{abstract}We address the problem of detecting changes in multivariate datastreams, and we investigate the intrinsic difficulty that change-detection methods have to face when the data dimension scales. In particular, we consider a general approach where changes are detected by comparing the distribution of the log-likelihood of the datastream over different time windows. Despite the fact that this approach constitutes the frame of several change-detection methods, its effectiveness when data dimension scales has never been investigated, which is indeed the goal of our paper.
		
	We show that the magnitude of the change can be naturally measured by the symmetric Kullback-Leibler divergence between the pre- and post-change distributions, and that the detectability of a change of a given magnitude worsens when the data dimension increases. This problem, which we refer to as \emph{detectability loss}, is due to the linear relationship between the variance of the log-likelihood and the data dimension. We analytically derive the detectability loss on Gaussian-distributed datastreams, and empirically demonstrate that this problem holds also on real-world datasets and that can be harmful even at low data-dimensions (say, 10).

	\end{abstract}

	\section{Introduction}\label{sec:Introduction}
	Change detection, namely the problem of detecting changes in probability distribution of a 
	process generating a datastream, has been widely investigated on scalar (i.e. univariate) data.
	Perhaps, the reason beyond the univariate assumption is that change-detection 
	tests (CDTs) were originally developed for quality-control 
	applications~\cite{Basseville1993}, and much fewer works address the problem 
	of detecting changes in multivariate datastreams.
	
	A straightforward extension to the multivariate case would be to independently 
	inspect each component of the datastream with a scalar 
	CDT~\cite{Tartakovsky2006}, but this does not clearly provide a truly 
	multivariate solution, e.g., it is unable to detect changes affecting the 
	correlation among the data components.
	A common, truly multivariate approach consists in computing the 
	log-likelihood of the datastream and compare the distribution of the 
	log-likelihood over different time windows 
	(Section~\ref{sec:ProblemFormulation}). In practice, computing the 
	log-likelihood is an effective way to reduce the multivariate change-detection 
	problem to a univariate one, thus easily addressable by any scalar CDT. 
	Several CDTs for multivariate datastreams pursue this approach, and compute the log-likelihood with respect to a model fitted to a training set of stationary data:~\cite{Kuncheva2013_TKDE} uses Gaussian mixtures,~\cite{Krempl2011_APT,Polikar2014_COMPOSE} use nonparametric density 
	models.
	Other CDTs have been designed upon specific multivariate 
	statistics~\cite{schilling1986multivariate,agarwal2005empirical,lung2011robust,wang2002mean,Ditzler2011_Hellinger,nguyen2014constrained}.
	In the classification literature, where changes in the distribution are 
	referred to as concept-drift~\cite{Zliobaite2014}, changes are typically detected by monitoring the scalar 
	sequence of classification errors over 
	time~\cite{Gama2004,TNNLS13,bifet2007learning,Ross2012}.
	
	Even though this problem is of utmost relevance in datastream mining, no theoretical or experimental 
	study investigate how the data dimension $d$ impacts on the 
	change detectability. 
	In Section~\ref{sec:Theory}, we consider change-detection problems in $\mathbb{R}^d$ and 
	investigate how $d$ affects the detectability of a change when monitoring the 
	log-likelihood of the datastream. In this respect, we show that the symmetric
	Kullback-Leibler divergence (sKL) between pre-change and post-change distributions is an appropriate measure of the \emph{change magnitude}, and we introduce the \emph{Signal-to-Noise Ratio} \emph{of 
		the change} (SNR) to quantitatively assess the change detectability when monitoring 
	the log-likelihood. 
	
	Then, we show that the detectability of changes having a given magnitude progressively reduces when $d$ increases. 
	We refer to this phenomenon as \emph{detectability loss}, and we analytically demonstrate that, 
	in case of Gaussian random variables, the change detectability is upperbounded by a function that decays as $1 / d$. We demonstrate that detectability loss occurs also in non Gaussian cases as far as data components are independent, and we show that it affects also real-world datasets, which we approximate by Gaussian mixtures in our empirical analysis (Section~\ref{sec:Experiments}). Most importantly, detectability loss is not a consequence of density-estimation problems, as it holds either when data distribution is estimated from training samples or known. Our results indicate that detectability loss is a potentially harmful also at reasonably low-dimensions (e.g., 10) and not only in Big-Data scenarios.
	
	\section{Monitoring the Log-Likelihood}\label{sec:ProblemFormulation}
	
	\subsection{The Change Model} We assume that, in stationary conditions, the 
	datastream $\{\x(t), t = 1,\dots \}$ contains independent and identically 
	distributed (i.i.d.) random vectors $\x(t)\in \mathbb{R}^d$, drawn from a 
	random variable $\mX$ having probability-density-function (pdf) $\phi_0$, that 
	for simplicity we assume continuous, strictly positive and bounded. Here, $t$ denotes the time 
	instant, bold letters indicate column vectors, and $'$ is the matrix transpose 
	operator.
	
	For the sake of simplicity, we consider permanent changes $\change$ affecting 
	the expectation and/or correlation of $\mathcal X$:
	\begin{equation}\label{eq:change_model}
		\x(t) \sim 
		\begin{cases}
			\phi_0 \;\;\;\; t < \tau \\
			\phi_1 \;\;\;\; t \geq \tau  
		\end{cases},  \text{ where } \phi_1 (\mb x) = \phi_0(Q\mb x + \mb v)\,,
	\end{equation}
	where $\tau$ is the unknown change point, $\mb v \in \mathbb{R}^d$ changes the location $\phi_0$, and $Q\in O(d)\subset \mathbb{R}^{d \times d}$ is an orthogonal matrix that modifies the correlation among the components of $\mb x$. 
	This rather general change-model requires a truly multivariate monitoring scheme: changes affecting only the correlation among components of $\mb x$ cannot be perceived by analyzing each component individually, or by extracting straightforward features 
	(such as the norm) out of vectors $\x(t)$\footnote{We do not consider changes affecting data dispersion as these can be detected by monitoring the Euclidean norm of $\x(t)$.}. 
	
	\subsection{The Considered Change-Detection Approach}\label{subsec:CD_Approach}
	We consider the popular change-detection approach that consists in monitoring the 
	log-likelihood of $\x(t)$ with respect to 
	$\phi_0$~\cite{Kuncheva2013_TKDE,Song2007,sullivan2000change}: 
	\begin{equation}\label{eq:LogLikelihoodGeneral}
		\l(\x(t)) = \log(\phi_0(\x(t)))\,, \ \forall t\,.
	\end{equation}
	
	We denote by $L =\{\l(\x(t)), t = 1,\dots,\}$ the sequence of log-likelihood 
	values, and observe that in stationary conditions, $L$ contains i.i.d. data drawn from a scalar 
	random variable. When $\mX$ undergoes a change, the distribution of 
	$\l(\cdot)$ is also expected to change. Thus, changes $\change$ can be detected by comparing 
	the distribution of $\l(\cdot)$ over $W_P$ and $W_R$, two non-overlapping 
	windows of $L$, where $W_P$ refers to past data (that we assume are 
	generated from $\phi_0$), and $W_R$ refers to most recent ones (that are 
	possibly generated from $\phi_1$). In practice, a suitable test statistic 
	$\mathcal{T}\left(W_P, W_R\right)$, such as the t-statistic, Kolmogorov-Smirnov or Lepage 
	\cite{Lepage1974}, is computed to compare $W_P$ and $W_R$. In an hypothesis testing framework, this corresponds to formulating a test having as null hypothesis ``\emph{samples in $W_P$ and $W_R$ are from the same distribution}''.
	When $\mathcal{T}\left(W_P, W_R\right) > h$ we can safely 
	consider that the log-likelihood values over $W_P$ and $W_R$ are 
	from two different distributions, indicating indeed a change in $\mX$. The threshold $h>0$ controls the test significance. 
	
	There are two important aspects to be considered about this change-detection approach. First, that comparing data on different windows is not a genuine sequential monitoring scheme. However, this mechanism is at the core of several online change-detection methods~\cite{Kuncheva2013_TKDE,Song2007,bifet2007learning,RossTechnometrics2011}. Moreover, the power of the test $\mathcal{T}\left(W_P, W_R\right) > h$, namely the probability of rejecting the null hypothesis when the alternative holds, indicates the effectiveness of the test statistic $\mathcal{T}$ when the same is used in sequential-monitoring techniques. 
	Second, that $\phi_0$ in \eqref{eq:LogLikelihoodGeneral} is often unknown and has to be preliminarily estimated from a training set of stationary data. Then, $\phi_0$ is simply replaced by its estimate $\widehat{\phi}_0$. In practice, it is fairly reasonable to assume a training set of stationary data is given, while it is often unrealistic to assume $\phi_1$ is known, since the datastream might change unpredictably.
	
%
	
	\section{Theoretical Analysis}\label{sec:Theory}
The section sheds light on the relationship between change detectability and 
$d$. To this purpose, we introduce: \emph{i}) a measure of the \emph{change 
	magnitude}, and \emph{ii}) an indicator that quantitatively assesses \emph{change detectability}, 
namely how difficult is to detect a change when monitoring $\l(\cdot)$ as described in Section \ref{subsec:CD_Approach}.
Afterward, we can study the influence of $d$ on the change detectability provided that changes $\change$ have a constant magnitude. 

\subsection{Change Magnitude}
The magnitude of $\change$ can be naturally measured by the symmetric Kullback-Leibler divergence between $\phi_0$ and $\phi_1$ 
(also known as Jeffreys divergence):
\begin{equation}\label{eq:KL_general}
	\begin{aligned}
		\textnormal{sKL}&(\phi_0, \phi_1) := \textnormal{KL}(\phi_0, \phi_1) + \textnormal{KL}(\phi_1, \phi_0) \\ 
		&= \int_{\mathbb{R}^{d}} \log\frac{\phi_0(\mb x)}{\phi_1(\mb 
			x)}\phi_0(\mb x) d\mb x  + \int_{\mathbb{R}^{d}} \log\frac{\phi_1(\mb x)}{\phi_0(\mb 
			x)}\phi_1(\mb x) d\mb x \,.
	\end{aligned}
\end{equation}
This choice is supported by the Stein's Lemma~\cite{cover2012elements}, which states that $\textnormal{KL}(\phi_0, \phi_1)$ yields an upper-bound for the power of parametric hypothesis tests that determine whether a given sample population is generated from $\phi_0$ (null hypothesis) or $\phi_1$ (alternative hypothesis). In practice, large values of $\KLPhi$ indicate changes that are very apparent, since hypothesis tests designed to detect either $\change$ or $\phi_1 \to \phi_0$ can be very powerful.


	
\subsection{Change Detectability}\label{subsec:changeDetectability}
We define the following indicator to quantitatively 
assess the detectability of a change when monitoring $\l(\cdot)$.
\begin{definition} The \textnormal{signal-to-noise ratio} (SNR) of the change $\change$ is defined as:
	\begin{equation}\label{eq:SNRofTheChange}
		\SNRPhi := \frac{\left(\ezero [\l(\x)] - \eone[\l(\x)]\right)^2}{\varzero[\l(\x)] + \varone[\l(\x)]},
	\end{equation}
	where $\textnormal{var}[\cdot]$ denotes the variance of a random variable. 
\end{definition}
In particular, $\SNRPhi$ measures the extent to which $\change$ is detectable by monitoring the expectation of $\l(\cdot)$. In fact, the numerator of \eqref{eq:SNRofTheChange} corresponds to the shift introduced by $\change$ in the expectation of $\l(\cdot)$ (i.e.,  the relevant information, the \emph{signal}) which is easy/difficult to detect relatively to its random fluctuations (i.e., the \emph{noise}), which are assessed in the denominator of \eqref{eq:SNRofTheChange}. 
Note that, if we replace the expectations and the variances in $\eqref{eq:SNRofTheChange}$ by their sample estimators, we obtain that $\SNRPhi$ corresponds -- up to a scaling factor -- to the square statistic of a Welch's $t$-test \cite{welch1947generalization}, that detects changes in the expectation of two sample populations. This is another argument supporting the use of $\SNRPhi$ as a measure of change detectability.

The following proposition relates the change magnitude $\KLPhi$ with the numerator of \eqref{eq:SNRofTheChange}.
\begin{proposition}\label{prop:KL_change}
	Let us consider a change $\change$ such that
	\begin{equation}\label{eq:change_model_prop1}
		\phi_1 (\mb x) = \phi_0(Q\mb x + \mb v)\,
	\end{equation}
	where $Q \in \mR^{d\times d}$ is orthogonal and $\mb{v} \in \mR^{d}$. Then, it holds:
	\begin{equation}\label{eq:Prop1_Statement}
		\KLPhi \geq \underset{\x \sim \phi_0}{E} [\l(\x)] - \underset{\x \sim \phi_1}{E} 
		[\l(\mb x)]
	\end{equation}
\end{proposition}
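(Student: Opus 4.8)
The plan is to show that the difference between the two sides of \eqref{eq:Prop1_Statement} is exactly $\KL(\phi_0,\phi_1)$, which is non-negative by Gibbs' inequality. First I would rewrite everything in terms of the four scalar quantities $\ezero[\LGZERO]$, $\ezero[\LGUNO]$, $\eone[\LGZERO]$ and $\eone[\LGUNO]$. Since $\l(\x)=\LGZERO$, expanding the logarithms in \eqref{eq:KL_general} gives
\[
\KLPhi=\bigl(\ezero[\LGZERO]-\ezero[\LGUNO]\bigr)+\bigl(\eone[\LGUNO]-\eone[\LGZERO]\bigr),
\]
while the right-hand side of \eqref{eq:Prop1_Statement} is $\ezero[\LGZERO]-\eone[\LGZERO]$. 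Subtracting, the $\eone[\LGZERO]$ terms and one $\ezero[\LGZERO]$ term cancel, so that
\[
\KLPhi-\bigl(\ezero[\l(\x)]-\eone[\l(\x)]\bigr)=\eone[\LGUNO]-\ezero[\LGUNO],
\]
and it remains only to prove that $\eone[\LGUNO]\ge\ezero[\LGUNO]$.

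This is the single step that uses the structure \eqref{eq:change_model_prop1}, and it follows from the invariance of the differential entropy under the affine map $\x\mapsto Q\x+\mb v$. Because $Q$ is orthogonal, $|\det Q|=1$, so the substitution $\mb y=Q\x+\mb v$ is measure-preserving; applied to $\int\phi_0(Q\x+\mb v)\,d\x$ it first confirms that $\phi_1$ is a genuine density, and applied to $\int\phi_0(Q\x+\mb v)\log\phi_0(Q\x+\mb v)\,d\x$ it yields the identity $\eone[\LGUNO]=\ezero[\LGZERO]$ (i.e.\ the two distributions have equal differential entropy). Substituting this into the display above,
\[
\KLPhi-\bigl(\ezero[\l(\x)]-\eone[\l(\x)]\bigr)=\ezero[\LGZERO]-\ezero[\LGUNO]=\KL(\phi_0,\phi_1)\ge0,
\]
which proves the proposition. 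As a by-product, the same computation gives the exact identity $\ezero[\l(\x)]-\eone[\l(\x)]=\KL(\phi_1,\phi_0)$, so that the numerator of \eqref{eq:SNRofTheChange} is the square of a KL divergence and both sides of \eqref{eq:Prop1_Statement} are in fact non-negative.

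I do not anticipate a genuine obstacle here: the argument is a rearrangement of terms plus a single change of variables. The only care needed is with integrability — one should assume the differential entropy of $\phi_0$ and the divergences involved are finite (or simply observe that \eqref{eq:Prop1_Statement} is vacuous when $\KL(\phi_0,\phi_1)=+\infty$), so that splitting the integrals into separate expectations of $\LGZERO$ and $\LGUNO$ is legitimate. The standing assumptions on $\phi_0$ (continuous, strictly positive and bounded) guarantee that $\LGZERO$ is bounded above, which is what makes these manipulations safe.
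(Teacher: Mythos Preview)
Your proof is correct and follows essentially the same route as the paper: both expand $\KLPhi$, reduce the claim to showing $\eone[\LGUNO]-\ezero[\LGUNO]\ge 0$, and verify this via the measure-preserving change of variables afforded by the orthogonality of $Q$. The only cosmetic difference is that the paper applies the substitution to the second term and recognises the difference as $\KL(\phi_1,\phi_2)$ for an auxiliary density $\phi_2(\mb y):=\phi_1(Q\mb y+\mb v)$, whereas you apply it to the first term (entropy invariance) and obtain $\KL(\phi_0,\phi_1)$ directly---your variant has the small bonus of yielding the exact identity $\ezero[\l(\x)]-\eone[\l(\x)]=\KL(\phi_1,\phi_0)$.
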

\begin{proof}
	From the definition of $\KLPhi$ in \eqref{eq:KL_general} it follows
	\begin{equation}\nonumber 
		\begin{aligned}
			\KLPhi &= \underset{\x \sim \phi_0}{E} [\LGZERO] -  
			\underset{\x \sim \phi_0}{E} [\LGUNO] + \\
			&+ \underset{\x \sim \phi_1}{E} [\LGUNO] -  
			\underset{\x \sim \phi_1}{E} [\LGZERO]\,.
		\end{aligned}
	\end{equation}
	Since $\l(\cdot) = \log\left(\phi_0(\cdot)\right)$, \eqref{eq:Prop1_Statement} holds if and only if
	\begin{equation}\label{eq:Prop1_diff_exp}
		\underset{\x \sim \phi_1}{E} [\LGUNO] - \underset{\x \sim \phi_0}{E} [\LGUNO] \geq 0. 
	\end{equation}
	From \eqref{eq:change_model_prop1} it follows that $\phi_0(\x) = \phi_1(Q'(\mb x -\mb v))$, thus, by replacing the mathematical expectations with their integral expressions, \eqref{eq:Prop1_diff_exp} becomes
	\begin{equation}\label{eq:Prop1_eq2}
		\int \LGUNO \phi_1(\mb x) d\mb x - \int \log \left( \phi_1(\x)\right) \phi_1(Q'(\mb x -\mb v)) d\mb x \geq 0\\
	\end{equation}
	Let us define $\mb y = Q'(\mb x - \mb v)$, then $\mb x = Q\mb y + \mb v$ and $d\x=|\textnormal{det}(Q)|d\mb y = d\mb y$, since $Q$ is orthogonal. Using this change of variables in the second summand of \eqref{eq:Prop1_eq2} we obtain 
	\begin{equation}\label{eq:last_eq_prop1}
		\int \LGUNO \phi_1(\mb x) d\mb x - \int \log \left( \phi_1(Q\mb y + \mb v)\right) \phi_1(\mb y) d\mb y \geq 0.
	\end{equation}
	Finally, defining $\phi_2(\mb y) := \phi_1(Q \mb y + \mb v)$ turns \eqref{eq:last_eq_prop1} into
	\begin{equation}\label{eq:Prop1_final}
		\int \LGUNO \phi_1(\mb x) d\mb x - \int \log \left(\phi_2(\mb y)\right) \phi_1(\mb y)) d\mb y \geq 0,
	\end{equation}
	which holds since the left-hand-side of \eqref{eq:Prop1_final} is $\textnormal{KL}(\phi_1, \phi_2)$.\\
\end{proof}

\subsection{Detectability Loss}
It is now possible to investigate the intrinsic challenge of change-detection problems when data dimension increases. In particular, we study how the change detectability (i.e., $\SNRPhi$) varies when $d$ 
increases and changes $\change$ preserve constant magnitude (i.e., $\KLPhi = const$). 
Unfortunately, since there are no general expressions for the variance of 
$\l(\cdot)$, we have to assume a specific distribution for $\phi_0$ to carry out 
any analytical development. As a relevant example, we consider Gaussian 
random variables, which enable a simple expression of $\l(\cdot)$. The following 
theorem demonstrates the \emph{detectability loss} for Gaussian distributions, namely that 
$\SNRPhi$ decays as $d$ increases.
\begin{theorem}\label{theorem:gaussian_detectability}
	Let $\phi_0 = \mathcal{N}(\mu_0, \Sigma_0)$ be a $d$-dimensional Gaussian pdf and $\phi_1 = \phi_0(Q\mb x +\mb v)$, where $Q \in \mR^{d\times d}$ is orthogonal and $\mb{v} \in \mR^{d}$. Then, it holds
	\begin{equation}\label{eq:detectability_loss}
		\SNRPhi \leq \frac{C}{d}
	\end{equation}
	where the constant $C$ depends only on $\KLPhi$.
\end{theorem}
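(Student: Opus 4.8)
The plan is to make both densities fully explicit, reduce $\l(\cdot)$ to an affine function of a single quadratic form, evaluate the few moments that appear in \eqref{eq:SNRofTheChange}, and then bound the numerator and the denominator separately so that the entire $d$-dependence is confined to the denominator.

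First I would rewrite $\phi_1$ as a Gaussian. Inserting $Q\x + \mb v - \mu_0 = Q\big(\x - Q'(\mu_0 - \mb v)\big)$ into the exponent of $\phi_0$ and using $|\det Q| = 1$, one finds $\phi_1 = \mathcal N(\mu_1,\Sigma_1)$ with $\mu_1 = Q'(\mu_0 - \mb v)$ and $\Sigma_1 = Q'\Sigma_0 Q$. The fact I will use is that $Q$ orthogonal forces $|\Sigma_1| = |\Sigma_0|$.

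Next, since $\l(\x) = -\tfrac d2\log(2\pi) - \tfrac12\log|\Sigma_0| - \tfrac12\, q(\x)$ with $q(\x) := (\x-\mu_0)'\Sigma_0^{-1}(\x-\mu_0)$, all the quantities in \eqref{eq:SNRofTheChange} reduce to the first two moments of the quadratic form $q$. Under $\phi_0$ one has $q \sim \chi^2_d$, hence $\ezero[q]=d$ and $\varzero[q]=2d$, so $\varzero[\l(\x)] = d/2$ — this is exactly the linear growth of the log-likelihood variance that drives the phenomenon. Under $\phi_1$, the standard formula for the mean of a quadratic form gives $\eone[q] = \operatorname{tr}(\Sigma_0^{-1}\Sigma_1) + (\mu_1-\mu_0)'\Sigma_0^{-1}(\mu_1-\mu_0)$, so that
\begin{equation}\nonumber
	\ezero[\l(\x)] - \eone[\l(\x)] \;=\; \tfrac12\Big(\operatorname{tr}(\Sigma_0^{-1}\Sigma_1) - d + (\mu_1-\mu_0)'\Sigma_0^{-1}(\mu_1-\mu_0)\Big).
\end{equation}
Because $|\Sigma_1| = |\Sigma_0|$, the right-hand side is exactly $\KL(\phi_1,\phi_0)$; in particular it is nonnegative, and by Proposition~\ref{prop:KL_change} it is at most $\KLPhi$. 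Hence the numerator of \eqref{eq:SNRofTheChange} satisfies $\big(\ezero[\l(\x)] - \eone[\l(\x)]\big)^2 \le (\KLPhi)^2$.

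Finally, for the denominator I would simply drop the nonnegative term $\varone[\l(\x)]$, keeping $\varzero[\l(\x)] + \varone[\l(\x)] \ge \varzero[\l(\x)] = d/2$. Combining the two estimates gives
\begin{equation}\nonumber
	\SNRPhi \;\le\; \frac{(\KLPhi)^2}{d/2} \;=\; \frac{2\,(\KLPhi)^2}{d},
\end{equation}
i.e.\ \eqref{eq:detectability_loss} with $C = 2(\KLPhi)^2$, a constant depending only on the change magnitude. The only step requiring real care is the bookkeeping of the second and third paragraphs: recasting $\phi_1$ as a Gaussian and tracking the log-determinant so that it cancels, plus recalling the quadratic-form moment identities; everything afterwards is one-line estimation. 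Note that identifying the mean gap with $\KL(\phi_1,\phi_0)$ already delivers both needed bounds at once — nonnegativity of KL, and $\KL(\phi_1,\phi_0) \le \KL(\phi_0,\phi_1)+\KL(\phi_1,\phi_0)=\KLPhi$ — so Proposition~\ref{prop:KL_change} is convenient but not strictly necessary here; alternatively $\operatorname{tr}(\Sigma_0^{-1}\Sigma_1)\ge d$ follows from AM--GM applied to the eigenvalues of $\Sigma_0^{-1}\Sigma_1$, which are positive with product $|\Sigma_1|/|\Sigma_0|=1$. Discarding $\varone[\l(\x)]$ is conservative but already enough to produce the claimed $1/d$ decay.
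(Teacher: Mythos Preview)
Your proof is correct and follows essentially the same route as the paper: both compute $\varzero[\l(\x)]=d/2$ via the $\chi^2_d$ law of the quadratic form, bound the squared mean-gap by $\KLPhi^2$, and discard $\varone[\l(\x)]$ to obtain the $1/d$ bound with $C=2\,\KLPhi^2$. Your explicit identification of $\ezero[\l(\x)]-\eone[\l(\x)]$ with $\KL(\phi_1,\phi_0)$ (thanks to $|\Sigma_1|=|\Sigma_0|$) is a slightly more self-contained way to obtain both the nonnegativity and the upper bound on the numerator than the paper's direct appeal to Proposition~\ref{prop:KL_change}, but the underlying argument is the same.
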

\begin{proof}
	Basic algebra leads to the following expression for $\l(\x)$ when $\phi_0 = \mathcal{N}(\mu_0, \Sigma_0)$:
	\begin{equation}\label{eq:LogLikelihoodGaussian}
		\l(\x) = - \frac{1}{2}\log\left((2\pi)^d\textnormal{det}(\Sigma_0)\right) 
		-\frac{1}{2}(\x-\m_0)'\Sigma_0^{-1}(\x-\m_0)\,.
	\end{equation}
	The first term in the right-hand-side of  \eqref{eq:LogLikelihoodGaussian} is constant, while the second term is distributed as a chi-squared having $d$ degrees of freedom. 
	Therefore,
	\begin{equation}\label{eq:varince_loglikelihood}
		\underset{\x \sim \phi_0}{\text{var}}[\l(\x)] = 
		\text{var}\left[-\frac{1}{2} \chi^2(d)\right] = \frac{d}{2}\,.
	\end{equation}
	Then, from the definition of $\SNRPhi$ in \eqref{eq:SNRofTheChange} and Proposition \ref{prop:KL_change}, it follows that
	\begin{equation}\label{eq:SNRofTheChange_Gaussian}\nonumber
		\begin{aligned}
			\SNRPhi &\leq \frac{\KLPhi^2}{\underset{\x \sim \phi_0}{\text{var}[\l(\x)]} + 
				\underset{\x \sim \phi_1}{\text{var}[\l(\x)]}} \leq 
			\frac{\KLPhi^2}{\underset{\x \sim \phi_0}{\text{var}[\l(\x)]}} \\
			&=\frac{\KLPhi^2}{d/2}  = \frac{C}{d}\,.
		\end{aligned}
	\end{equation}
\end{proof}
Theorem \ref{theorem:gaussian_detectability} shows detectability loss for Gaussian distributions. In fact, when $d$ increases and $\KLPhi$ remains constant, $\SNRPhi$ is upper-bounded by a function that monotonically decays as $1/d$. The decaying trend of $\SNRPhi$ indicates that detecting changes becomes more difficult when $d$ increases. Moreover, the decaying rate does not depend on $\KLPhi$, thus this problem equally affects all possible changes $\change$ defined as in 	\eqref{eq:change_model}, disregarding their magnitude.

\subsection{Discussion}\label{subsec:Discussion}
First of all, let us remark that Theorem \ref{theorem:gaussian_detectability} implicates detectability loss only when $\KLPhi$ is kept constant. Assuming constant change magnitude is necessary to correctly investigate the influence of the sole data dimension $d$ on the change detectability. In fact, when the change magnitude increases with $d$, changes might become even easier to detect as $d$ grows. This is what experiments in~\cite{Zimek2012}(Section 2.1) show, where outliers\footnote{Even though similar techniques can be sometimes used for both change-detection and anomaly-detection, the two problems are intrinsically different, since the former aims at recognizing process changes, while the latter at identifying spurious data.} become easier to detect when $d$ increases. However, in that experiment, the change-detection problem becomes easier as $d$ increases, since each component of $\x$ carries additional information about the change, thus increases $\KLPhi$.

Detectability loss can be also proved when $\phi_0$ is non Gaussian, as far as its components are independent. In fact, if $\phi_0(\mb x) = \prod_{i = 0}^d\phi^{(i)}_0(x^{(i)})$, where~$(\cdot)^{(i)}$ denotes either the marginal of a pdf or the component of a vector, it follows 
\begin{equation}\label{eq:varIndependent}
	\underset{\x \sim \phi_0}{\textnormal{var}[\l(\x)]} = \sum_{i = 0}^d \underset{\x \sim \phi_0}{\textnormal{var}}\left[\log\left(\phi^{(i)}_0(x^{(i)})\right)\right]\,,
\end{equation}
since $\log(\phi_0^{(i)}(x^{(i)}))$ are independent. Clearly, \eqref{eq:varIndependent} increases with $d$, since its summands are positive. Thus, also in this case, the upperbound of $\SNRPhi$ decays with $d$ when $\KLPhi$ is kept constant.

Remarkably, detectability loss does not depend on how the change $\change$ affects $\mathcal{X}$. 
Our results hold, for instance, when either $\change$ affects all the components of $\mathcal{X}$ or some of them remain irrelevant for change-detection purposes. Moreover, detectability loss occurs independently of the specific change-detection method used on the log-likelihood (e.g. sequential analysis, or window comparison), as our results concern $\SNRPhi$ only.


In the next section we show that detectability loss affects also real-world change-detection problems. To this purpose, we design a rigorous empirical analysis to show that the power of customary hypothesis tests actually decreases with $d$ when data are non Gaussian and possibly dependent. 



	
\section{Empirical Analysis}\label{sec:Experiments}
Our empirical analysis has been designed to address the following goals:
\emph{i}) showing that $\SNRPhi$, which is the underpinning element of our theoretical result, is a suitable measure of change detectability. In particular, we show that the power of hypothesis tests able to detect both changes in mean and in variance of $\l(\cdot)$ also decays.
\emph{ii}) Showing that detectability loss is not due to density-estimation problems, but it becomes a more serious issue when $\phi_0$ is estimated from training data.
\emph{iii}) Showing that detectability loss occurs also in Gaussian mixtures, and 
\emph{iv}) showing that detectability loss occurs also on high-dimensional real-world datasets, which are far from being Gaussian or having independent components. We address the first two points in Section \ref{subsec:GaussianDatastreams}, while the third and fourth ones in Sections \ref{subsec:GaussianMixtureModels} and \ref{subsec:RealWorld}, respectively. 

In our experiments, the change-detection performance is assessed by numerically computing the power of two customary hypothesis tests, namely the Lepage \cite{Lepage1974} and the one-sided $t$-test\footnote{We can  assume that $\change$ decreases the expectation of $\l$ since  $\underset{\x \sim \phi_0}{E} [\LGZERO] - \underset{\x \sim \phi_1}{E} [\LGZERO] \geq 0$ follows from \eqref{eq:Prop1_diff_exp}.} on data windows $W_P$ and $W_R$ which contains 500 data each. As we discussed in Section \ref{subsec:changeDetectability}, the t-statistic on the log-likelihood is closely related to $\SNRPhi$, while the Lepage is a nonparametric statistic that detects both location and scale changes\footnote{The Lepage statistic is defined as the sum of the squares of the Mann-Whitney and Mood statistics, see also~\cite{RossTechnometrics2011}.}. To compute the power, we set $h$ to guarantee a significance level\footnote{The value of $h$ for the Lepage test is given by the asymptotic approximation of the statistic in \cite{Lepage1974}.} $\alpha=0.05$. Following the procedure in Appendix, we synthetically introduce changes $\change$ having $\KLPhi = 1$ which, in the univariate Gaussian case, corresponds to $\mb v$ equals to the standard deviation of $\phi_0$.

\subsection{Gaussian Datastreams}\label{subsec:GaussianDatastreams}

We generate Gaussian datastreams having dimension $d\in\{1,2,4,8,16,32,64,128\}$ and, for each value 
of $d$, we prepare 10000 runs, with $\phi_0 = 
\mathcal{N}(\mu_0,\Sigma_0)$ and $\phi_1 = \mathcal{N}(\mu_1,\Sigma_1)$. The 
parameters $\mu_0\in \mathbb{R}^d$ and $\Sigma_0\in \mathbb{R}^{d \times d}$ 
have been randomly generated, while $\mu_1\in \mathbb{R}^d$ and $\Sigma_1\in 
\mathbb{R}^{d \times d}$ have been set to yield $\KLPhi = 1$ (see Appendix). In each run we generate 1000 samples: $\{\mb x(t),\, t = 1,\dots, 500\}$ from $\phi_0$, and $\{\mb x(t),\, t = 501,\dots, 1000\}$ from $\phi_1$. Then, we compute the datastream $L = \{\l(\x(t)),\, t = 1,\dots, 1000\}$, and define $W_P = \{\l(\x(t)), t = 1,\dots, 500\}$ and $W_R = \{\l(\x(t)), t = 501,\dots, 1000\}$.

We repeat the same experiment replacing $\phi_0$ with its estimate 
$\widehat{\phi}_0(\x)$, where $\widehat{\mu}_0$ and $\widehat{\Sigma}_0$ are 
computed using the sample estimators over an additional training set $TR$ whose size grows 
linearly with $d$, i.e. $\#TR = 100\cdot d$. We denote by $\widehat{L} = 
\{\widehat{\l}(\x(t)), t = 1,\dots, 1000\}$ the sequence of estimated 
log-likelihood values. Finally, we repeat the whole experiments keeping $\#TR = 100$ for any value of $d$, and we denote by $\widehat{L}_{100}$ the corresponding sequence of log-likelihood 
values.

Figure~\ref{fig:results}(a) shows that the power of both the Lepage and one-sided $t$-test substantially decrease when $d$ increases. This result is coherent with our theoretical analysis of Section \ref{sec:Theory}, and confirms that $\SNRPhi$ is a suitable measure of change detectability. While it is not surprising that the power of the $t$-test decays, given its connection with the $\SNRPhi$, it is remarkable that the power of the Lepage test also decays, as this fact indicates that it becomes more difficult to detect both changes in the mean and in the dispersion of $L$. The decaying power of both tests indicates that the corresponding test statistics decrease with $d$, which imply larger detection delays when using this statistics in sequential monitoring schemes.

Note that detectability loss is not due to density-estimation issues, but rather to the fact that the change-detection problem becomes intrinsically more challenging, as it occurs in the ideal case where $\phi_0$ is known (solid lines). When $\l$ is computed from an estimated $\widehat{\phi}_0$ (dashed and dotted lines), the problem becomes even more severe, and worsens when the number of training data does not grow with $d$ (dotted lines).

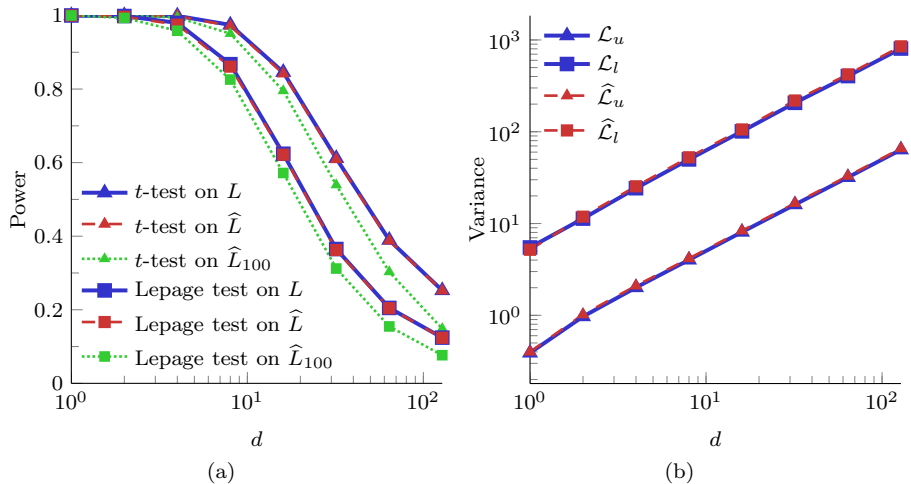
\begin{figure*}[t!]
	\centering
	\subfigure[]{\label{fig:power_Gaussian}\begin{tikzpicture}

\begin{axis}[%
width=0.4\columnwidth,
height=0.4\columnwidth,
	scale only axis,
xticklabel style = {font = \small},
xmin=1,
xmax=128,
xmode = log,
xlabel={$d$},
ymin=0,
ymax=1,
ylabel={Power},
ylabel style = {anchor = near ticklabel, at = {(0.15,0.5)}},
ytick = {0,0.2,0.4,0.6,0.8,1},
minor ytick={0.1,0.3,0.5,0.7,0.9},
yticklabel style = {font = \small},
axis x line*=bottom,
axis y line*=left,
legend style={at={(0.0,0.01)},
	anchor=south west, 
	draw = none,
	fill = none,
	font=\small,
	legend cell align=left},
]

\addplot [
color=myblue,
solid,
line width=1.5pt,
mark=triangle*,
mark size = 2pt,
mark options={solid},
]
table{gauss_t-test_L.txt};
\addlegendentry{$t$-test on $L$}

\addplot [
color=myred,
dash pattern = on 5pt off 5pt,
line width=1pt,
mark=triangle*,
mark size = 1.8pt,
mark options={solid},
]
table{gauss_t-test_L_hat.txt};
\addlegendentry{$t$-test on $\widehat L$}

\addplot [
color=mygreen,
dash pattern = on 1pt off 1pt,
line width=1pt,
mark=triangle*,
mark size = 1.5pt,
mark options={solid},
]
table{gauss_t-test_L_hat_100.txt};
\addlegendentry{$t$-test on $\widehat L_{100}$}

\addplot [
color=myblue,
solid,
line width=1.5pt,
mark=square*,
mark size = 2pt,
mark options={solid},
]
table{gauss_LP_test_L.txt};
\addlegendentry{Lepage test on $L$}

\addplot [
color=myred,
dash pattern = on 5pt off 5pt,
line width=1pt,
mark=square*,
mark size = 1.8pt,
mark options={solid},
]
table{gauss_LP_test_L_hat.txt};
\addlegendentry{Lepage test on $\widehat L$}

\addplot [
color=mygreen,
dash pattern = on 1pt off 1pt,
line width=1pt,
mark=square*,
mark size = 1.5pt,
mark options={solid},
]
table{gauss_LP_test_L_hat_100.txt};
\addlegendentry{Lepage test on $\widehat L_{100}$}

\end{axis}
\end{tikzpicture}
	\subfigure[]{\label{fig:variance_all}\begin{tikzpicture}

\begin{axis}[%
width=0.4\columnwidth,
height=0.4\columnwidth,
	scale only axis,
xticklabel style = {font = \small},
xmin=1,
xmax=128,
xlabel={$d$},
xmode = log,
ymin=0,
ymode = log,
ylabel={Variance},
ylabel style = {anchor = near ticklabel, at = {(0.15,0.5)}},
yticklabel style = {font = \small},
axis x line*=bottom,
axis y line*=left,
legend style={at={(0.01,1)},
	anchor=north west, 
	draw = none,
	fill = none,
	font=\small,
	legend cell align=left},
]

\addplot [
color=myblue,
solid,
line width=1.5pt,
mark=triangle*,
mark size = 2pt,
mark options={solid},
]
table{gmm_variance_L_upper.txt};
\addlegendentry{$\l_u$}

\addplot [
color=myblue,
solid,
line width=1.5pt,
mark=square*,
mark size = 2pt,
mark options={solid},
]
table{gmm_variance_L_lower.txt};
\addlegendentry{$\l_l$}

\addplot [
color=myred,
dash pattern = on 5pt off 5pt,
line width=1pt,
mark=triangle*,
mark size = 1.8pt,
mark options={solid},
]
table{gmm_variance_L_hat_upper.txt};
\addlegendentry{$\widehat{\l}_u$}
%
\addplot [
color=myred,
dash pattern = on 5pt off 5pt,
line width=1pt,
mark=square*,
mark size = 1.8pt,
mark options={solid},
]
table{gmm_variance_L_hat_lower.txt};
\addlegendentry{$\widehat{\l}_l$}

\end{axis}
\end{tikzpicture}
	\caption{(a) Power of the Lepage and one-sided $t$-test empirically computed on sequences generated as in Section~\ref{subsec:GaussianDatastreams}. Detectability loss clearly emerges when the log-likelihood is computed 
	using $\phi_0$ (denoted by $L$) or its estimates fitted on $100\cdot d$ samples ($\widehat{L}$) or from 100 samples ($\widehat{L}_{100}$). 
	(b) The sample variance of $\mathcal L_u(\cdot)$ \eqref{eq:likelihoodGM_upper} and $\mathcal L_l(\cdot)$ \eqref{eq:likelihoodGM_lower} computed as in Section \ref{subsec:GaussianMixtureModels}. As in the Gaussian case, both these 
	variances grow linearly with $d$ and similar results hold when using 
	$\widehat\phi_0$, which is estimate from $200\cdot d$ training data. 
	}
	\label{fig:results}
\end{figure*}
	
	Figure~\ref{fig:results}(a) shows that the power of both the Lepage and $t$-test substantially decrease when $d$ increases. This result is coherent with our theoretical analysis of Section \ref{sec:Theory}, and confirms that $\SNRPhi$ is a suitable measure of change detectability. While it is not surprising that the power of the $t$-test decays considered its connection with the $\SNRPhi$, it is remarkable that the power of the Lepage test also decays, indicating that changes become more difficult to be detected both in the mean and in the dispersion of $L$. 
	
	We interpret this result as a consequence of the intrinsic worsening of change detectability when $d$ increases. Moreover, detectability loss is not due to density-estimation issues, as it affects also ideal case where $\phi_0$ is known (solid lines). In the more realistic cases where $\l$ is computed from an estimated $\widehat{\phi}_0$ (dashed and dotted lines), the problem become more severe, in particular when the number of training data does not increase with $d$ (dotted lines).

	\subsection{Gaussian mixtures}\label{subsec:GaussianMixtureModels}
	
We now consider $\phi_0$ and $\phi_1$ as Gaussian mixtures, to prove that 
detectability loss occurs also when datastreams are generated/approximated by more general distribution models.
Mimicking the proof of Theorem \ref{theorem:gaussian_detectability}, we show that when $d$ increases and $\KLPhi$ is kept constant, the upper-bound of $\SNRPhi$ decreases. To this purpose, it is enough to show that $\varzero[\l(\x)]$ increases with $d$.

The pdf of a mixture of $k$ Gaussians is
\begin{equation}\label{eq:GaussianMixture}
	\begin{aligned}
		\phi_0(\x) &= \sum_{i = 1}^{k} \lambda_{0,i}\mathcal 
		N(\m_{0,i},\Sigma_{0,i})(\x)= \\
		&=\sum_{i = 1}^{k}
		\frac{\lambda_{0,i}}{(2\pi)^{d/2}\textnormal{det}(\Sigma_{0,i})^{1/2}} 
		e^{-\frac{1}{2}(\x-\m_{0,i})'\Sigma_{0,i}^{-1}(\x-\m_{0,i})},
	\end{aligned}
\end{equation}
where $\lambda_{0,i} > 0$ is the weight of the $i$-th Gaussian 
$\mathcal{N}\left(\m_{0,i},\Sigma_{0,i}\right)$.
Unfortunately, the log-likelihood $\l(\x)$ of a Gaussian mixture does not admit an expression similar to~\eqref{eq:LogLikelihoodGaussian} and two approximations are typically used to avoid severe numerical issues when 
$d \gg 1$.

%
%
The first approximation consists in considering only the Gaussian of the mixture yielding the largest likelihood, as in~\cite{Kuncheva2013_TKDE} 
i.e.,
\begin{equation}\label{eq:likelihoodGM_upper}
	\begin{aligned}
		\l_u(\x) &= -\frac{k\lambda_{0,i^*}}{2}\Bigl(\log\left((2\pi)^d\textnormal{det}(\Sigma_{0,i^*})\right) + \\
		& \qquad + (\x-\m_{0,i^*})'\Sigma_{0,i^*}^{-1}(\x-\m_{0,i^*}) \Bigr)
	\end{aligned}
\end{equation}
where $i^*$ is defined as
\begin{equation}\label{eq:maximumLikelihoodGaussian}\nonumber
	i^* = \underset{i = 1, \dots,k}{\text{argmax}} \left(  
	\frac{\lambda_{0,i}}{(2\pi)^{d/2}\textnormal{det}(\Sigma_{0,i})^{1/2}} 
	e^{-\frac{1}{2}(\x-\m_{0,i})'\Sigma_{0,i}^{-1}(\x-\m_{0,i})}  \right). 
\end{equation}
The second approximation we consider is:
\begin{equation}\label{eq:likelihoodGM_lower}
	\begin{aligned}
		\l_l(\x) &= -\frac{1}{2}\sum_{i=1}^k \lambda_{0,i}\Bigl(\log\left((2\pi)^d\textnormal{det}(\Sigma_{0,i})\right)+\\
		&\qquad+(\x-\m_{0,i})'\Sigma_{0,i}^{-1}(\x-\m_{0,i})\Bigr),
	\end{aligned}
\end{equation}
that is a lower bound of $\l(\cdot)$ due 	to the Jensen inequality.
	
We consider the same values of $d$ as in Section~\ref{subsec:GaussianDatastreams} and, for each of these, we generate 1000 datastreams each containing 500 data drawn from the Gaussian 
mixture $\phi_0$. We assume $k=2$ and $\lambda_{0,1}=\lambda_{0,2}=0.5$, while the paramters $\mu_{0,1}$, $\mu_{0,2}$, $\Sigma_{0,1}$, $\Sigma_{0,2}$ are randomly generated. We then compute the sample variance of 
both $\l_u$ and $\l_l$ over each datastream and report their average in 
Figure~\ref{fig:results}(b). As in Section \ref{subsec:GaussianDatastreams}, 
we repeat this experiment estimating $\widehat{\phi}_0$ from a training set 
containing $200 \cdot d$ additional samples, then computing $\widehat{\l}_u$ and $\widehat{\l}_l$.

		\begin{figure*}[t!]
			\centering
			\subfigure[]{\label{fig:wine_stat}\begin{tikzpicture}

\begin{axis}[%
width=0.4\columnwidth,
height=0.4\columnwidth,
scale only axis,
xtick = {10,20,30,40,50},
xticklabel style = {font = \small},
xmin=1,
xmax=50,
xlabel={$d$},
ymin=0,
ymax=1,
ylabel={Power},
ylabel style = {anchor = near ticklabel, at = {(0.15,0.5)}},
ytick = {0,0.2,0.4,0.6,0.8,1},
minor ytick={0.1,0.3,0.5,0.7,0.9},
yticklabel style = {font = \small},
axis x line*=bottom,
axis y line*=left,
legend style={at={(1,1)},
	anchor=north east, 
	draw = none,
	fill = none,
	font=\small,
	legend cell align=left},
]

\addplot [
color=myred,
solid,
line width=1.5pt,
mark=triangle*,
mark size = 2pt,
mark options={solid},
mark repeat = {5},
]
table{particle_t-test_L_hat_upper.txt};
\addlegendentry{$t$-test on $\widehat{\l}_u$}

\addplot [
color=myred,
dash pattern = on 5pt off 5pt,
line width=1pt,
mark=triangle*,
mark size = 1.8pt,
mark options={solid},
mark repeat = {5},
]
table{particle_t-test_L_hat_lower.txt};
\addlegendentry{$t$-test on $\widehat{\l}_l$}

\addplot [
color=myblue,
solid,
line width=1.5pt,
mark=square*,
mark size = 2pt,
mark options={solid},
mark repeat = {5},
]
table{particle_LP-test_L_hat_upper.txt};
\addlegendentry{Lepage test on $\widehat{\l}_u$}

\addplot [
color=myblue,
dash pattern = on 5pt off 5pt,
line width=1pt,
mark=square*,
mark size = 1.8pt,
mark options={solid},
mark repeat = {5},
]
table{particle_LP-test_L_hat_lower.txt};
\addlegendentry{Lepage test on $\widehat{\l}_l$}

\end{axis}
\end{tikzpicture}
			\subfigure[]{\label{fig:powers_wine}\begin{tikzpicture}

\begin{axis}[%
width=0.4\columnwidth,
height=0.4\columnwidth,
	scale only axis,
xticklabel style = {font = \small},
xmin=1,
xmax=11,
xlabel={$d$},
ymin=0,
ymax=1,
ylabel={Power},
ylabel style = {anchor = near ticklabel, at = {(0.15,0.5)}},
ytick = {0,0.2,0.4,0.6,0.8,1},
minor ytick={0.1,0.3,0.5,0.7,0.9},
yticklabel style = {font = \small},
axis x line*=bottom,
axis y line*=left,
]

\addplot [
color=myred,
solid,
line width=1.5pt,
mark=triangle*,
mark size = 2pt,
mark options={solid},
]
table{wine_t-test_L_hat_upper.txt};

\addplot [
color=myred,
dash pattern = on 5pt off 5pt,
line width=1pt,
mark=triangle*,
mark size = 1.8pt,
mark options={solid},
]
table{wine_t-test_L_hat_lower.txt};

\addplot [
color=myblue,
solid,
line width=1.5pt,
mark=square*,
mark size = 2pt,
mark options={solid},
]
table{wine_LP-test_L_hat_upper.txt};

\addplot [
color=myblue,
dash pattern = on 5pt off 5pt,
line width=1pt,
mark=square*,
mark size = 1.8pt,
mark options={solid},
]
table{wine_LP-test_L_hat_lower.txt};

\end{axis}
\end{tikzpicture}
			\caption{\small Detectability loss on the Particle Dataset (a) and Wine Dataset (b), approximated by a mixture of 2 and 4 Gaussians, respectively, using both  $\widehat{\l}_u$ \eqref{eq:likelihoodGM_upper} and $\widehat{\l}_l$ \eqref{eq:likelihoodGM_lower}. The powers of both Lepage and $t$-test decay, confirming the detectability loss. 
			 The tests based on $\widehat{\l}_u$ outperform the corresponding ones based on $\widehat{\l}_l$ because this latter approximation yields a larger variance, as can be seen in Fig \ref{fig:results}(b).} 
			\label{fig:results_real}
		\end{figure*}
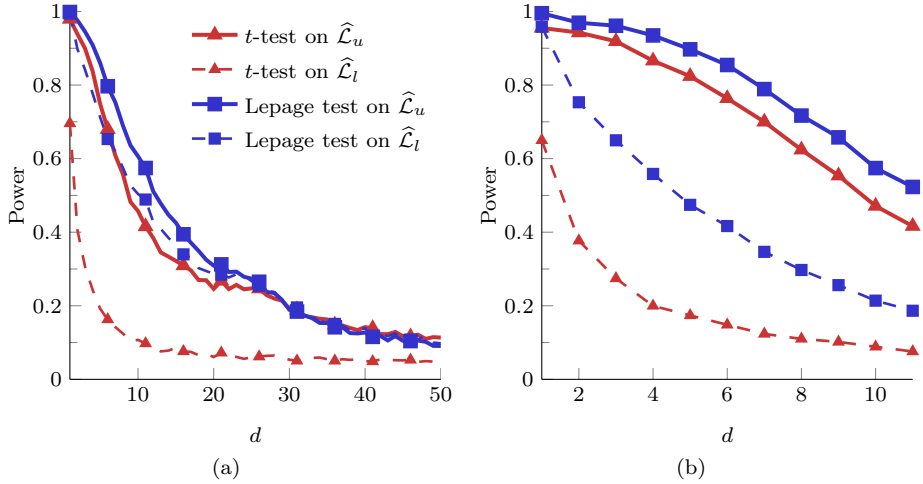

	Figure~\ref{fig:results}(b) shows that the variances of $\l_u$ and $\l_l$ 
	grow linearly with respect to $d$, as in the Gaussian 
	case~\eqref{eq:varince_loglikelihood}. This result indicates that also in this case, where
	datastreams are generated by correlated and bimodal distributions, detectability loss occurs, since the $\SNRPhi$ decreases when $d$ increases. As in Section \ref{subsec:GaussianDatastreams}, we experienced the same trend when the log-likelihoods  $\widehat{\l}_u$ and $\widehat{\l}_l$ are computed with respect to fitted models 
	$\widehat{\phi}_0$. We further observe that $\l_l$ exhibits a much larger 
	variance than $\l_u$, thus we expect this to achieve lower change-detection 
	performance than $\l_u$. Probably, this is why $\l_u$ was 
	used in \cite{Kuncheva2013_TKDE} instead of $\l_l$.

	\subsection{Real-World Data}\label{subsec:RealWorld}
	To investigate detectability loss in real-world datasets, we design a change-detection problem on the \emph{Wine Quality Dataset} \cite{cortez2009modeling} and the \emph{MiniBooNE Particle Dataset} \cite{roe2005boosted} from the UCI repository \cite{Lichman:2013}. The Wine dataset has 12 dimensions: 11 corresponding to numerical results of laboratory analysis (such as density, Ph, residual sugar), and one corresponding to a final grade (from 0 to 10) for each different wine. We consider the vectors of laboratory analysis of all white wines having a grade above 6, resulting in a $11$-dimensional dataset containing 3258 data. The Particle dataset contains numerical measurements from a physical experiment designed to distinguish electron from muon neutrinos. Each sample has $50$-dimensions and we considered only data from muon class, yielding $93108$	 data.
	
	Since in either datasets $\phi_0$ is completely unknown, we need to estimate it for both introducing changes having constant magnitude and computing the log-likelihood. We adopt Gaussian mixtures and estimate $k$ by 5-fold cross validation over the whole datasets, obtaining $k=4$ and $k=2$ for Wine and Particle dataset respectively. 
	
	We process each dataset as follows. Let us denote by $D$ the dataset dimension and for each value of $d = 1, \dots, D$ we consider only $d$ components of our dataset that are randomly selected. We then generate a $d$-dimensional training set of $200\cdot d$ samples and a test set of 1000 samples (datastream), which are extracted by a bootstrap procedure without replacement. The second half of the datastream is perturbed by the change $\widetilde{\phi}_0 \to \widetilde{\phi}_1$, which is defined by fitting at first $\widetilde{\phi}_0$ on the whole $d$-dimensional dataset, and then computing $\widetilde{\phi}_1$ according to the procedure in Appendix. Then, we estimate $\widehat{\phi}_0$ from the training set and we compute $\mathcal{T}(\widehat{W}_P,\widehat{W}_R)$, where $\widehat{W}_P$, $\widehat{W}_R$ are defined as in Section \ref{subsec:GaussianDatastreams}. This procedure is repeated $5000$ times to numerically compute the test power. Note that the number of Gaussians in both $\widetilde{\phi}_0$ and $\widehat{\phi}_0$ is the value of $k$ estimated from whole $D$-dimensional dataset, and that $\widetilde{\phi}_0$ is by no means used for change-detection purposes. 
	
	Figure~\ref{fig:results_real} reports the power of both Lepage and one-sided $t$-tests on the Particle dataset and Wine dataset, considering $\widehat{\l}_u$ \eqref{eq:likelihoodGM_upper} and $\widehat{\l}_l$ \eqref{eq:likelihoodGM_lower} as approximated expressions of the likelihoods. The power of both tests is monotonically decreasing, indicating an increasing difficulty in detecting a change among $\widehat{W}_P$ and $\widehat{W}_R$ when $d$ grows. This result is in agreement with the claim of Theorem \ref{theorem:gaussian_detectability} and the results shown in the previous sections. In contrast of Gaussian datastreams, the Lepage here turns to be more powerful than the $t$-test. This fact indicates that it is important to monitor also the dispersion of  $\l(\cdot)$ in case of Gaussian mixture, where $\l(\cdot)$ can be multimodal. The decay of the power of the Lepage test also indicate that monitoring both expectation and dispersion of $\l(\cdot)$ does not prevent the detectability loss. 
	Figure~\ref{fig:results_real} indicates that $\widehat{\l}_u(\cdot)$ guarantees superior performance than $\widehat{\l}_l(\cdot)$ and this is a consequence of the lower variance of $\widehat{\l}_u(\cdot)$. This fact also underlines the importance of considering the variance of $\l(\cdot)$ in measures of change detectability, as in \eqref{eq:SNRofTheChange}.
	We finally remark that have set a change magnitude ($\KLPhi = 1$) that is quite customary in change-detection experiments, as in the univariate Gaussian case this corresponds to setting $\mb v$ equals to the standard deviation of $\phi_0$. Therefore, since in our experiments the power of both tests is almost halved when $d \approx 10$, we can conclude that detectability loss is not only a Big-Data issue.

	\section{Conclusions}
We provide the first rigorous study of the challenges that change-detection methods have to face when data dimension scales. 
Our theoretical and empirical analyses reveal that the popular approach of monitoring the log-likelihood of a multivariate datastream suffers detectability loss when data dimension increases. 
Remarkably, detectability loss is not a consequence of density-estimation errors -- even though these further reduce detectability -- but it rather refers to an intrinsic limitation of this change-detection approach. 
Our theoretical results demonstrate that detectability loss occurs independently on the specific statistical tool used to monitor the log-likelihood and does not depend on the number of input components affected by the change.
Our empirical analysis, which is rigorously performed by keeping the change-magnitude constant when scaling data-dimension, confirms detectability loss also on real-world datastreams.
Ongoing works concern extending this study to other change-detection approaches and to other families of distributions.

	\section*{Appendix: Generating Changes of Constant Magnitude}\label{appendix}
Here we describe a procedure to select, given $\phi_0$, an orthogonal matrix $Q\in\mR^{d\times d}$ and a vector $\mb v \in \mR^d$ such that $\phi_1 = \phi_0(Q \mb x + \mb v)$ guarantees $\KLPhi=1$ in the case of Gaussian pdfs, and $\KLPhi \approx 1$ for arbitrary distributions. Extensions to different values of $\KLPhi$ are straightforward. Since $\phi_1(\mb x) = \phi_0(Q\mb x+\mb v)$, we formulate the problem as generating at first a rotation matrix $Q$ such that 
$$0 < \textnormal{sKL}(\phi_0(\cdot), \phi_0(Q \cdot)) < 1$$ 
and then defining the 
translation vector $\mb v$ to adjust $\phi_1$ such that $\KLPhi$ reaches (or approaches) 1.

We proceed as follows: we randomly define a rotation axis $\mb r$, and a 
sequence of rotations matrices $\{Q_j\}_j$ around $\mb r$, where the rotation 
angles monotonically decrease toward $0$ (thus $Q_j$ tends to the identity 
matrix as $j \to \infty$). Then, we set $Q_{j^*}$
as the largest rotation yielding a $\textnormal{sKL}<1$, namely
\begin{equation}\label{eq:def_Q_j_star}
	j^* = \min\{j \,:\, \textnormal{sKL}(\phi_0(\cdot), \phi_0(Q_j \cdot)) < 1\}.
\end{equation}
When $\phi_0$ is continuous and bounded (as in case of Gaussian mixtures) it can be easily proved that such a $j^*$ exists. 

In the case of Gaussian pdfs, when 	$\phi_0 = \mathcal{N}(\mu_0,\Sigma_0)$, $\KLPhi$ admits a closed-form expression:
\begin{equation}\label{eq:KL_gaussian}
	\begin{aligned}
		\textnormal{sKL}&(\phi_0,\phi_1) = \frac{1}{2}\biggl[\mb v'\Sigma_0^{-1}\mb v + \mb v'Q\Sigma_0^{-1}Q'\mb v +\\
		&+ 2\mb v'\Sigma_0^{-1}(I-Q)\mu_0 + 2\mb v'Q\Sigma_0^{-1}(Q'-I)\mu_0 +\\ &+\textnormal{Tr}(Q'\Sigma_0^{-1}Q\Sigma_0) + \textnormal{Tr}(\Sigma_0^{-1}Q'\Sigma_0Q)-2d +\\
		&+2\mu_0'(I-Q')\Sigma_0^{-1}(I-Q)\mu_0\biggr]\,,
	\end{aligned}
\end{equation} 
and $\textnormal{sKL}(\phi_0(\cdot), \phi_0(Q_j \cdot))$ can be exactly computed to solve \eqref{eq:def_Q_j_star}. When there are no similar expressions
for $\KLPhi$ this has to be computed via Monte Carlo simulations.

After having set the rotation matrix $Q$, we randomly generate a unit-vector 
$\mb u$ as in \cite{alippi2014intelligence} and determine a suitable 
translation along the line $\mb v = \rho \mb u$, where $\rho >0$, 
to achieve $\KLPhi =1$. Again, the closed-form expression 
\eqref{eq:KL_gaussian} allows to directly compute the exact value of $\rho$ by 
substituting $\mb v = \rho \mb u$ into~\eqref{eq:KL_gaussian}. This yields a 
quadratic equation in $\rho$, whose positive solution $\rho^*$ provides $\mb v 
= \rho^* \mb u$ that leads to $\KLPhi = 1$. When the are no analytical expressions for 
$\KLPhi$, we generate an increasing sequence 
$\{\rho_n\}_n$ such that $\rho_0 = 0$ and $\rho_n\to\infty$ as $n\to\infty$, and set 
\begin{equation}\label{eq:def_rho_star}
	n^* =  \max\{n \,:\, \textnormal{sKL}	(\phi_0(\cdot), \phi_0(Q\cdot + \rho_n\mb u)) < 1\},
\end{equation}
where $\textnormal{sKL}(\phi_0(\cdot), \phi_0(Q\cdot + \rho_n\mb u))$ is computed by Monte 
Carlo simulations. After having solved \eqref{eq:def_rho_star}, we determine 
$\rho^*$ via linear interpolation of $[\rho_{n^*},\rho_{n^*+1}]$ on the 
corresponding values of the sKL. In this case, we can only guarantee $\KLPhi \approx 
1$ with an accuracy that can be improved by increasing the resolution of 
$\{\rho_n\}_n$.
			
	{\bibliographystyle{IEEEtran}
	\bibliography{multivariateCDT}}
\end{document}